\newtheorem{lemma}{Lemma}
\newtheorem{remark}{Remark}
\newtheorem{theorem}{Theorem}
\definecolor{deanPURPLE}{rgb}{.3,0,.5}
\newcommand{\li}{LING }
\newcommand{\X}{{\bf X}}
\newcommand{\Y}{{\bf Y}}
\newcommand{\D}{{\bf D}}
\newcommand{\U}{{\bf U}}
\newcommand{\V}{{\bf V}}
\newcommand{\specialcell}[2][c]{\begin{tabular}[#1]{@{}c@{}}#2\end{tabular}}
\title{Fast Ridge Regression with Randomized Principal Component Analysis and Gradient Descent}
\author{ {\bf Yichao Lu} \thanks{ \quad yichaolu@wharton.upenn.edu} \and {\bf Dean P. Foster} \thanks{\quad foster@wharton.upenn.edu}\\
 \\
Department of Statistics\\ Wharton,
University of Pennsylvania\\ Philadelphia, PA, 19104-6340
}
\begin{document}




\maketitle
\begin{abstract} 
We propose a new two stage algorithm LING for large scale regression problems. LING has the same risk as the well known Ridge Regression under the fixed design setting and can be computed much faster. Our experiments have shown that LING performs well in terms of both prediction accuracy and computational efficiency compared with other large scale regression algorithms like Gradient Descent, Stochastic Gradient Descent and Principal Component Regression on both simulated and real datasets.
\end{abstract} 

\section{Introduction}
Ridge Regression (RR) is one of the most widely applied penalized regression algorithms in machine learning problems. Suppose $\X$ is the $n\times p$ design matrix and $\Y$ is the $n\times 1$ response vector, ridge regression tries to solve the problem
\begin{equation}
\hat{\beta}=\arg\min\|\X\hat{\beta}-\Y\|^2+n\lambda\|\hat{\beta}\|^2
\label{target}
\end{equation}
which has an explicit solution 
\begin{equation}
\hat{\beta}=(\X^{\top}\X+n\lambda)^{-1}\X^{\top}\Y
\label{explicit}
\end{equation} 
However,  for modern problems with huge design matrix $\X$, computing (\ref{explicit}) costs $O(np^2)$ FLOPS. When $p > n\gg 1$ one can consider the dual formulation of (\ref{target}) which also has an explicit solution as mentioned in \citep{yichao13,saunders98} and the cost is $O(n^2p)$ FLOPS. In summary, trying to solve (\ref{target}) exactly costs $O(np \min{\{n,p\}})$ FLOPS which can be very slow.\\
There are faster ways to approximate (\ref{explicit}) when computational cost is the concern. One can view RR as an optimization problem and use Gradient Descent (GD) which takes $O(np)$ FLOPS in every iteration. However, the convergence speed for GD depends on the spectral of $\X$ and $\lambda$. When $\X$ is ill conditioned, GD requires a huge number of iterations to converge which makes it very slow. For huge datasets, one can also apply stochastic gradient descent (SGD) \citep{zhang04,johnson13,bottou2010}, a powerful tool for solving large scale optimization problems. \\
Another alternative for regression on huge datasets is Principle Component Regression (PCR) as mentioned in \citep{artemiou09,Ian05}, which runs regression only on the top $k_1$ principal components of the $\X$ matrix. PCA for huge $\X$ can be computed efficiently by randomized algorithms like \citep{tropprandom,halko11}. The cost for computing top $k_1$ PCs of $X$ is $O(npk_1)$ FLOPS. The connection between RR and PCR is well studied by \citep{dhillonrisk}. The problem of PCR is that when a large proportion of signal sits on the bottom PCs, it has to regress on a lot of PCs which makes it both slow and inaccurate.\\
In this paper, we propose a two stage algorithm LING\footnote{LING is the Chinese of ridge} which is a faster way of computing the RR solution (\ref{explicit}). A detailed description of the algorithm is given in section 2. In section 3, we prove that LING has the same risk as RR under the fixed design setting. In section 4, we compare the performance of PCR, GD, SGD and LING in terms of prediction accuracy and computational efficiency on both simulated and real data sets.

\section{The Algorithm}
\subsection{Description of the Algorithm}
\li is a two stage algorithm. The intuition of \li is quite straight forward. Note that regressing $\Y$ on $\X$ is essentially projecting $\Y$ onto the span of $\X$. Let $\U_1$ denotes the top $k_2$ PCs (singular vectors) of $\X$ and let $\U_2$ denote the remaining PCs. The projection of $\Y$ onto the span of $\X$ can be decomposed into two orthogonal parts, the projection onto $\U_1$ and the projection onto $\U_2$. In the first stage, we pick a $k_2\ll p$ and the projection onto $\U_1$ can be computed directly by $\hat{\Y}_1=\U_1\U_1^{\top}\Y$ which is exactly the same as running a PCR on top $k_2$ PCs. For huge $\X$, computing the top $k_2$ PCs exactly is very slow, so we use a faster randomized SVD algorithm for computing $\U_1$ which is proposed by \citep{tropprandom} and described below. In the second stage, we first compute $\Y_r=\Y-\hat{\Y}_1$ and $\X_r=\X-\U_1\U_1^{\top}\X$ which are the residual of $\Y$ and $\X$ after projecting on $\U_1$. Then we compute the projection of $\Y$ onto the span of $\U_2$ by solving the optimization problem $\min_{\hat{\gamma}_2 \in \mathcal{R}^p}\|\X_r\hat{\gamma}_2-\Y_r\|^2$ with GD (Algorithm \ref{al3}). Finally, since RR shrinks the projection of $\Y$ onto $\X$ via regularization, we also shrink the projections in both stages accordingly. Shrinkage in the first stage is performed directly on the estimated regression coefficients and shrinkage in the second stage is performed by adding a regularization term to the optimization problem mentioned above.
Detailed description of \li is shown in Algorithm \ref{al1}.

\begin{algorithm}[tb]
\caption{LING}
\begin{algorithmic}\label{algo:1}
\STATE
{\bf Input :}  Data matrix $\X$ ,$\Y$. $\U_1$, an orthonormal matrix consists of top $k_2$ PCs of $\X$. $d_1,d_2,...d_{k_2}$, top $k_2$ singular values of $\X$. Regularization parameter $\lambda$, an initial vector $\hat{\gamma}_{2,0}$ and number of iterations $n_2$ for GD .
\STATE
{\bf Output :}  $\hat{\gamma}_{1,s}$, $\hat{\gamma}_2$, the regression coefficients.

\STATE 1.Regress $\Y$ on $\U_1$, let $\hat{\gamma}_1=\U_1^{\top}\Y$.

\STATE 2.Compute the residual of previous regression problem.
Let $\Y_{r}=\Y-\U_1\hat{\gamma}_1$.

\STATE 3.Compute the residual of $\X$ regressing on $\U_1$. Use $\X_r=\X-\U_1\U_1^{\top}\X$ to denote the residual of $\X$.

\STATE 4.Use gradient descent with optimal step size with initial value $\hat{\gamma}_{2,0}$ (see algorithm 3)
to solve the RR problem $\min_{\hat{\gamma}_2 \in \mathcal{R}^p}\|\X_r\hat{\gamma}_2-\Y_r\|^2+n\lambda\|\hat{\gamma}_2\|^2$.

\STATE 5. Compute a shrinkage version of $\hat{\gamma}_1$ by $(\hat{\gamma}_{1,s})_i=\frac{d_i^2}{d_i^2+n\lambda}(\hat{\gamma}_1)_i$

\STATE 6.The final estimator is $\hat{\Y}=\U_1\hat{\gamma}_{1,s}+ \X_r\hat{\gamma}_2$.\\

\end{algorithmic}
\label{al1}
\end{algorithm}

\begin{algorithm}[tb]
\caption{Random SVD}
\begin{algorithmic}

\STATE {\bf Input :}  design matrix $\X$, target dimension $k_2$, number of power iterations $i$.
\STATE {\bf Output :} $\U_1\in n\times k_2$, the matrix of top $k_2$ left singular vectors of $\X$, $d_1,d_2,...d_{k_2}$, the top $k_2$ singular values of $\X$.

\STATE 1.Generate random matrix $R_1\in p\times k_2$ with i.i.d standard Gaussian entries.

\STATE 2.Estimate the span of top $k_2$ left singular vectors of $\X$ by $A_1=(\X\X^{\top})^i\X R_1$.

\STATE 3.Use QR decomposition to compute $Q_1$ which is an orthonormal basis of the column space of $A_1$.

\STATE 4.Compute SVD of the reduced matrix $Q_1^{\top}\X=\U_0\D_0\V_0^{\top}$.

\STATE 5.$\U_1=Q_1\U_0$ gives the top $k_2$ singular vectors of $\X$ and the diagonal elements of $\D_0$ gives the singular values.

\end{algorithmic}
\label{al2}
\end{algorithm}

\begin{algorithm}

\caption{Gradient Descent with Optimal Step Size (GD)}
\begin{algorithmic}

\STATE {\bf Goal :} Solve the ridge problem $\min_{\hat{\gamma} \in \mathcal{R}^p}\|\X\hat{\gamma}-\Y\|^2+n\lambda\|\hat{\gamma}\|^2$.

\STATE {\bf Input :} Data matrix $\X$, $\Y$, regularization parameter $\lambda$, number of iterations $n_2$, an initial vector $\hat{\gamma}_{0}$
 \STATE {\bf Output :} $\hat{\gamma}$

 \FOR{$t=0$ {\bfseries to} $n_2-1$}
 \STATE$Q=2\X^{\top}\X+2n\lambda I$
 \STATE$w_t=2\X^{\top}\Y-Q\hat{\gamma}_{t}$
 \STATE$s_t=\frac{w_t^{\top}w_t}{w_t^{\top}Qw_t}$. $s_t$ is the step size which makes the target function decrease the most in direction $w_t$.
 \STATE $\hat{\gamma}_{t+1}=\hat{\gamma}_{t}+s_t\cdot w_t$.
 \ENDFOR

\end{algorithmic}
\label{al3}
\end{algorithm}
\begin{remark}
\li can be regarded as a combination of PCR and GD. The first stage of LING is a very crude estimation of $\Y$ and the second stage adds a correction to the first stage estimator. Since we don't need a very accurate estimator in the first stage it suffices to pick a very small $k_2$ in contrast with the $k_1$ PCs needed for PCR. In the second stage, the design matrix $\X_r$ is a much better conditioned matrix than the original $\X$ since the directions with largest singular value have been removed. As introduced in section 2.2, Algorithm  \ref{al3} converges much faster with a better conditioned matrix. Hence GD in the second stage of \li converges faster than directly applying GD for solving equation \ref{target}. The above property guarantees that LING is both fast and accurate compared with PCR and GD. More details about on the computational cost will be discussed in section 2.2.
\label{r1}
\end{remark}
\begin{remark}
Algorithm \ref{al2} is essentially approximating the subspace of top left singular vectors by random projection. It provides a fast approximation of the top singular values and vectors for large $\X$ when computing the exact SVD is very slow.  Theoretical guarantees and more detailed explanations can be found in \citep{tropprandom}. Empirically we find in the experiments, Algorithm \ref{al2} may occasionally generate a bad subspace estimator due to randomness which makes PCR perform badly. On the other hand, LING is much more robust since in the second stage it compensate for the signal missing in the first stage. In all the experiments, we set $i=1$.
\end{remark}


The shrinkage step (step 5) in Algorithm \ref{al1} is only necessary for theoretical purposes since the goal is to approximate Ridge Regression which shrinks the Least Square estimator over all directions. 
In practice shrinkage over the top $k_2$ PCs is not necessary. Usually the number of PCs selected ($k_2$) is very small. From the bias variance trade off perspective, the variance reduction gained from this shrinkage step is at most $O(\frac{k_2}{n})$ under the fixed design setting \citep{dhillonrisk} which is a tiny number. Moreover, since the top singular values of $\X^{\top}\X$ are usually very large compared with $n\lambda$ (since large $\lambda$ will introduce large bias), the shrinkage factor $\frac{d_i^2}{d_i^2+n\lambda}$ will be pretty close to $1$ for top singular values. We use shrinkage in  Algorithm \ref{al1} because the risk of the shrinkage version of LING is exactly the same as RR as proved in section 3.\\
Algorithm \ref{al2} can be further simplified if we skip the shrinkage step mentioned in previous paragraph. Instead of computing the top $k_2$ PCs, the only thing we need to know is the subspace spanned by these PCs since the first stage is essentially projecting $\Y$ onto this subspace. In other words, we can replace $\U_1$ in step 1, 2, 3 of Algorithm \ref{al1} with $Q_1$ obtained in step 3 of  Algorithm \ref{al2} and directly let $\hat{\Y}=Q_1\hat{\gamma}_{1}+ \X_r\hat{\gamma}_2$. In the experiments of section 4 we use this simplified version.\\

\subsection{Computational Cost}
We claim that the cost of LING is $O\big(np(k_2+n_2)\big)$ where $k_2$ is the number of PCs used in the first stage and $n_2$ is the number of iterations of GD in the second stage. According to \citep{tropprandom}, the dominating step in Algorithm \ref{al2} is computing $(\X\X^{\top})^i\X R_1$ and $Q_1^{\top}\X$ which costs $O(npk_2)$ FLOPS. Computing $\hat{\gamma}_1$ and $\Y_r$ cost less than $O(npk_2)$. Computing $\X_r$ costs $O(npk_2)$. So the computational cost before the GD step is $O(npk_2)$. For the GD stage, note that in every iteration $Q$ never need to be constructed explicitly. While computing $w_t$ and $s_t$, always multiply matrix  and vector first gives a cost of $O(np)$ for every iteration. So the cost for GD stage is $O(n_2np)$. Add all pieces together the cost of LING is $O\big(np(k_2+n_2)\big)$ FLOPS.\\
Let $n_1$ be the number of iterations required for solving
(\ref{target}) directly by GD and $k_1$ be the number of PCs used for
PCR. Easy to check that the cost for GD is $O(n_1np)$ FLOPS and the
cost for PCR is $O(npk_1)$. As mentioned in remark \ref{r1}, the
advantage of LING over GD and PCR is that $k_1$ and $n_1$ might have
to be really large to achieve high accuracy but much smaller values of
the pair $(k_2$, $n_2)$ will work for LING. \\
Consider the case when the signal is widely spread among all PCs (the projection of $\Y$ onto the bottom PCs of $\X$ is large) instead of concentrating on the top ones, $k_1$ needs to be large to make PCR perform well since all the signals on bottom PCs are discarded by PCR. But LING doesn't need to include all the signals in the first stage regression since the signal left over will be estimated in the second GD stage. Therefore LING is able to recover most of the signal even with a small $k_2$. \\
In order to understand the connection between accuracy and number of iterations in Algorithm \ref{al3} , we state the following theorem in \cite{marina07}:
\begin{theorem}
Let $g(z)=\frac{1}{2}z^{\top}Mz+q^{\top}z$ be a quadratic function where $M$ is a PSD matrix. Suppose $g(z)$ achieves minimum at $z^*$. Apply Algorithm \ref{al3} to solve the minimization problem. Let $z_t$ be the $z$ value after $t$ iterations, then the gap between $g(z_t)$ and $g(z^*)$, the minimum of the objective function satisfies
\begin{equation}
\frac{g(z_{t+1})-g(z^*)}{g(z_t)-g(z^*)}\le C=\left(\frac{A-a}{A+a}\right)^2
\end{equation}
where $A,a$ are the largest and smallest eigenvalue of the $M$ matrix.
\label{converge}
\end{theorem}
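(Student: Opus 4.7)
The plan is to reduce the convergence ratio to a statement about the distribution of eigenvalues of $M$ on the current error direction, and then close it out with a one-line Kantorovich-type estimate. I will assume $M$ is positive definite so that $z^{*}=-M^{-1}q$ is unique; the PSD case follows by restricting to the range of $M$.

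First I would rewrite everything in terms of the error vector $e_t := z_t - z^{*}$. A direct computation gives the two identities $g(z_t)-g(z^{*}) = \tfrac{1}{2}\,e_t^{\top} M e_t$ and $w_t = -\nabla g(z_t) = -M e_t$, and from the formula for $s_t$ this yields $s_t = (e_t^{\top}M^2 e_t)/(e_t^{\top}M^3 e_t)$. Substituting $z_{t+1}=z_t+s_t w_t$ into $\tfrac{1}{2}e_{t+1}^{\top}M e_{t+1}$ and using the definition of $s_t$ to cancel the quadratic-in-$s_t$ term, I get
\begin{equation*}
\frac{g(z_{t+1})-g(z^{*})}{g(z_t)-g(z^{*})} \;=\; 1 - \frac{(e_t^{\top} M^2 e_t)^2}{(e_t^{\top} M e_t)\,(e_t^{\top} M^3 e_t)} .
\end{equation*}
This step is just algebra and is the conceptually easy part.

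Next I would diagonalize. Write $M=U\Lambda U^{\top}$ with $\Lambda=\mathrm{diag}(\lambda_1,\dots,\lambda_p)$, and set $u = U^{\top} e_t$ with coordinates $u_i$. Defining weights $p_i = \lambda_i u_i^2 / \sum_j \lambda_j u_j^2$ (which are nonnegative and sum to $1$), a short calculation shows that
\begin{equation*}
\frac{(e_t^{\top} M^2 e_t)^2}{(e_t^{\top} M e_t)(e_t^{\top} M^3 e_t)} \;=\; \frac{\bigl(\sum_i p_i \lambda_i\bigr)^2}{\sum_i p_i \lambda_i^{2}} \;=\; \frac{(\mathbb{E}\Lambda)^2}{\mathbb{E}\Lambda^2},
\end{equation*}
so the target bound becomes the purely one-dimensional statement $(\mathbb{E}\Lambda)^2/\mathbb{E}\Lambda^2 \ge 4aA/(A+a)^2$ whenever the ``random variable'' $\Lambda$ is supported in $[a,A]$.

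The main obstacle, and the only nontrivial ingredient, is this last inequality — essentially Kantorovich's inequality. I would prove it directly: since $\lambda_i \in [a,A]$, one has $(\lambda_i-a)(A-\lambda_i)\ge 0$, which after taking $p_i$-weighted expectations gives $\mathbb{E}\Lambda^2 \le (A+a)\mathbb{E}\Lambda - aA$. Substituting this into the desired inequality reduces everything to the trivially nonnegative quadratic $\bigl((A+a)\mathbb{E}\Lambda - 2aA\bigr)^2 \ge 0$. Rearranging yields $1-(\mathbb{E}\Lambda)^2/\mathbb{E}\Lambda^2 \le (A-a)^2/(A+a)^2$, which combined with the identity in the first paragraph is exactly the claim. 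The whole argument is therefore three short steps: an algebraic simplification of the per-step decrease, a change of variable putting it in Rayleigh-quotient form, and the one-line Kantorovich bound.
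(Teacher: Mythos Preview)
Your argument is correct and is in fact the classical proof of the steepest-descent rate for strongly convex quadratics: the exact-line-search identity $1-(e_t^{\top}M^2e_t)^2/\bigl((e_t^{\top}Me_t)(e_t^{\top}M^3e_t)\bigr)$, followed by Kantorovich's inequality. The reduction to $\bigl((A+a)\mathbb{E}\Lambda-2aA\bigr)^2\ge 0$ via $(\lambda_i-a)(A-\lambda_i)\ge 0$ is clean and checks out.

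As for the comparison: the paper does not actually prove this theorem. It merely states the result and attributes it to an external reference (\cite{marina07}), then immediately applies it to derive the rates \eqref{rgd} and \eqref{rpcgd}. So there is no ``paper's own proof'' to compare against; your write-up supplies what the paper omits. One small remark: you assume $M$ positive definite and wave at the PSD case by ``restricting to the range of $M$.'' That is fine in spirit, but note that in the paper's application the matrix is always $2\X^{\top}\X+2n\lambda I$ or $2\X_r^{\top}\X_r+2n\lambda I$ with $\lambda>0$, so $M$ is genuinely positive definite there and the PSD caveat never arises.
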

Theorem \ref{converge} shows that the sub optimality of the target function decays exponentially as the number of iterations increases and the speed  of decay depends on the largest and smallest singular value of the PSD matrix that defines the quadratic objective function.
If we directly apply GD to solve (\ref{target}), Let $f_1(\beta)=\|\X\beta-\Y\|^2+n\lambda\|\beta\|^2$. Assume $f_1$ reaches its minimal at $\hat{\beta}$. Let $\hat{\beta}_{t}$ be the coefficient after $t$ iterations and let $d_i$ denote the $i^{th}$ singular value of $\X$. Apply theorem \ref{converge} we have
\begin{equation}
\frac{f_1(\hat{\beta}_{t+1})-f_1(\hat{\beta})}{f_1(\hat{\beta}_{t})-f_1(\hat{\beta})}\le C =\left(\frac{d_{1}^2-d_{p}^2}{d_{1}^2+d_{p}^2+2n\lambda}\right)^2
\label{rgd}
\end{equation}
Similarly for the second stage of LING,
Let $f_2(\gamma_2)=\|\X_r\gamma_2-\Y_r\|^2+n\lambda\|\gamma_2\|^2$. Assume $f_2$ reaches its minimal at $\hat{\gamma}_2$. We have
\begin{equation}
\frac{f_2(\hat{\gamma}_{2,t+1})-f_2(\hat{\gamma}_2)}{f_2(\hat{\gamma}_{2,t})-f_2(\hat{\gamma}_2)}\le C=\left(\frac{d_{k_2+1}^2}{d_{k_2+1}^2+2n\lambda}\right)^2
\label{rpcgd}
\end{equation}
In most real problems, the top few singular values of $\X^{\top}\X$ are much larger than the other singular values and $n\lambda$. Therefore the constant $C$ obtained in (\ref{rgd}) can be very close to 1 which makes GD algorithm converges very slowly. On the other hand, removing the top few PCs will make $C$ in (\ref{rpcgd}) significantly smaller than 1. In other words, GD may take a lot of iterations to converge when solving (\ref{target}) directly while the second stage of LING takes much less iterations to converge. This can also be seen in the experiments of section 4.
\section{Theorems}
In this section we compute the risk of LING estimator under the fixed design setting. For simplicity, assume $\U_1,\D_0$ generated by Algorithm \ref{al2} give exactly the top $k_2$ left singular vectors and singular values of $\X$ and GD in step 4 of Algorithm  \ref{al1} converges to the optimal solution. Let $\X=\U\D\V^\top$ be the SVD of $\X$ where $\U=(\U_1,\U_2)$ and $\V=(\V_1,\V_2)$. Here $\U_1,\V_1$ are top $k_2$ singular vectors and $\U_2, \V_2$ are bottom $p-k_2$ singular vectors. Let $\D=\text{diag}(\D_1,\D_2)$ where $\D_1\in k_2\times k_2$ contains top $k_2$ singular values denoted by $d_1\ge d_2\ge...\ge d_{k_2}$ and $\D_2\in p-k_2\times p-k_2$ contains bottom $p-k_2$ singular values. Let $\D_3=\text{diag}({\bf 0},\D_2)$ (replace $\D_1$ in $\D$ by a zero matrix of the same size).\\
\subsection{The Fixed Design Model}
Assume $\X$, $\Y$ comes from the fixed design model $\Y=\X\beta+\epsilon$ where $\epsilon\in n\times 1$ are i.i.d Gaussian noise with variance $\sigma^2$. Note that $\X=\U_1\D_1\V_1^{\top}+\X_r$, the fixed design model can also be written as 
\[  \Y=(\U_1\D_1\V_1^{\top}+\X_r)\beta+\epsilon=\U_1\gamma_1+\X_r\gamma_2+\epsilon\] where $\gamma_1=\D_1\V_1^{\top}\beta$ and $\gamma_2=\beta$. We use the $l_2$ distance between $\mathbb{E}(\Y|\X)$ (the best possible prediction given $\X$) and $\hat{\Y}=\U_1\hat{\gamma}_{1,s}+\X_r\hat{\gamma}_2$ (the prediction by LING) as the loss function. The risk of LING can be written as
\begin{eqnarray*}
&&\frac{1}{n}\mathbb{E}\|\mathbb{E}(\Y|\X)-\U_1\hat{\gamma}_{1,s}-\X_r\hat{\gamma}_2\|^2\\
=&&\frac{1}{n}\mathbb{E}\|\U_1\gamma_1+\X_r\gamma_2-\U_1\hat{\gamma}_{1,s}-\X_r\hat{\gamma}_2\|^2
\end{eqnarray*}
We can further decompose the risk into two terms:
\begin{equation}
\begin{split}
&\frac{1}{n}\mathbb{E}\|\U_1\gamma_1+\X_r\gamma_2-\U_1\hat{\gamma}_{1,s}-\X_r\hat{\gamma}_2\|^2
=\\ &\frac{1}{n}\mathbb{E}\|\U_1\gamma_1-\U_1\hat{\gamma}_{1,s}\|^2
+\frac{1}{n}\mathbb{E}\|\X_r\gamma_2-\X_r\hat{\gamma}_2\|^2 
\end{split}
\label{rdecomp}
\end{equation}
because $\U_1^{\top}\X_r=0$. Note that here the expectation is taken with respect to $\epsilon$.\\ 
Let's calculate the two terms in ($\ref{rdecomp}$) separately. For the first term we have:
\begin{lemma}
\begin{equation}
\frac{1}{n}\mathbb{E}\|\U_1\gamma_1-\U_1\hat{\gamma}_{1,s}\|^2=\frac{1}{n}\sum_{j=1}^{k_2}\frac{d_j^4\sigma^2+\gamma_{1,j}^2n^2\lambda^2}{(d_j^2+n\lambda)^2}
\label{lemma1}
\end{equation}
Here $\gamma_{1,j}$ is the $j^{th}$ element of $\gamma_1$.
\end{lemma}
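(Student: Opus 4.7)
The plan is to exploit two key facts: that $\U_1$ is an isometry (so $\|\U_1 v\|^2=\|v\|^2$), and that $\U_1^\top \X_r = 0$ by construction, which will cause almost all the cross-terms to vanish and reduce the computation to a clean bias–variance decomposition on the coefficient side.

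First I would substitute the model $\Y=\U_1\gamma_1+\X_r\gamma_2+\epsilon$ into $\hat{\gamma}_1=\U_1^\top\Y$. Using $\U_1^\top\U_1=I_{k_2}$ and $\U_1^\top\X_r=0$, this collapses to the very simple identity
\[
\hat{\gamma}_1 \;=\; \gamma_1 + \U_1^\top\epsilon.
\]
Step 5 of Algorithm \ref{al1} then gives $\hat{\gamma}_{1,s}=S\hat{\gamma}_1$, where $S$ is the $k_2\times k_2$ diagonal matrix with entries $S_{jj}=d_j^2/(d_j^2+n\lambda)$. Combining these yields $\gamma_1-\hat{\gamma}_{1,s}=(I-S)\gamma_1 - S\U_1^\top\epsilon$, and the orthonormality of $\U_1$ reduces the left-hand side of \eqref{lemma1} to
\[
\tfrac{1}{n}\,\mathbb{E}\bigl\|(I-S)\gamma_1 - S\U_1^\top\epsilon\bigr\|^2.
\]

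Next I would expand the square. Because $\mathbb{E}\epsilon=0$ and $\gamma_1$ is deterministic under the fixed design model, the cross term $-2\gamma_1^\top(I-S)S\,\mathbb{E}[\U_1^\top\epsilon]$ vanishes, leaving a clean bias plus variance. The bias term is $\|(I-S)\gamma_1\|^2=\sum_{j=1}^{k_2}\bigl(n\lambda/(d_j^2+n\lambda)\bigr)^2\gamma_{1,j}^2$, since $1-S_{jj}=n\lambda/(d_j^2+n\lambda)$. For the variance term, I would use $\mathbb{E}[\epsilon\epsilon^\top]=\sigma^2 I_n$ together with $\U_1^\top\U_1=I_{k_2}$ to get $\mathbb{E}[\U_1^\top\epsilon\epsilon^\top\U_1]=\sigma^2 I_{k_2}$, so that $\mathbb{E}\|S\U_1^\top\epsilon\|^2=\sigma^2\operatorname{tr}(S^2)=\sigma^2\sum_{j=1}^{k_2}d_j^4/(d_j^2+n\lambda)^2$.

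Adding the two contributions over a common denominator $(d_j^2+n\lambda)^2$ and dividing by $n$ produces exactly the stated right-hand side. Honestly, there is no genuine obstacle here — the result is essentially a one-line bias–variance computation once one recognizes that the first-stage estimator $\hat\gamma_1$ is an unbiased estimate of $\gamma_1$ plus Gaussian noise on the $k_2$-dimensional subspace $\U_1$, and that the shrinkage $S$ acts independently on each coordinate. The only place to be careful is the orthogonality $\U_1^\top\X_r=0$, which is what allows $\gamma_2$ to drop out of $\hat\gamma_1$ entirely and makes the decomposition \eqref{rdecomp} legitimate in the first place.
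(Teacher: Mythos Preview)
Your proposal is correct and follows essentially the same route as the paper: both write $\hat{\gamma}_{1,s}=S\gamma_1+S\U_1^{\top}\epsilon$ using $\U_1^{\top}\X_r=0$, then perform a bias--variance split and evaluate the variance via $\mathbb{E}\|S\U_1^{\top}\epsilon\|^2=\sigma^2\operatorname{tr}(S^2)$. The only cosmetic difference is that you drop $\U_1$ from the norm immediately using $\|\U_1 v\|=\|v\|$, whereas the paper carries $\U_1$ through the decomposition and removes it inside the trace.
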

\begin{proof}
Let $S \in k_2\times k_2$ be the diagonal matrix with $S_{j,j}=\frac{d_j^2}{d_j^2+n\lambda}$. So we have $\hat{\gamma}_{1,s}=S\U_1^{\top}\Y=S\gamma_1+S\U_1^{\top}\epsilon$, $\mathbb{E}(\hat{\gamma}_{1,s})=S\gamma_1$. 
\begin{eqnarray}
&&\frac{1}{n}\mathbb{E}\|\U_1\gamma_1-\U_1\hat{\gamma}_{1,s}\|^2\nonumber\\
&=&\frac{1}{n}\mathbb{E}\|\U_1\mathbb{E}(\hat{\gamma}_{1,s})-\U_1\hat{\gamma}_{1,s}\|^2\nonumber\\
&&+\frac{1}{n}\|\U_1\gamma_1-\U_1\mathbb{E}(\hat{\gamma}_{1,s})\|^2\nonumber\\
&=&\frac{1}{n}\mathbb{E}\|\U_1S\U_1^{\top}\epsilon\|^2+\frac{1}{n}\|\gamma_1-S\gamma_1\|^2\nonumber\\
&=&\frac{1}{n}\mathbb{E}\text{Tr}(\U_1S^2\U_1^{\top}\epsilon\epsilon^{\top})+\frac{1}{n}\|\gamma_1-S\gamma_1\|^2\nonumber\\
&=&\frac{1}{n}\mathbb{E}\text{Tr}(S^2)\sigma^2+\frac{1}{n}\|\gamma_1-S\gamma_1\|^2\nonumber\\
&=&\frac{1}{n}\sum_{j=1}^{k_2}\frac{d_j^4\sigma^2+\gamma_{1,j}^2n^2\lambda^2}{(d_j^2+n\lambda)^2}\nonumber
\end{eqnarray}
\end{proof}
Now consider the second term in (\ref{rdecomp}).\\
Note that \[\X_r=\U\D_3\V^{\top}\]
The residual $\Y_r$ after the first stage can be represented by 

\[\Y_r=\Y-\U_1\hat{\gamma_1}=(I-\U_1\U_1^{\top})\Y=\X_r\gamma_2+(I-\U_1\U_1^{\top})\epsilon\]
and the optimal coefficient obtained in the second GD stage is
\[\hat{\gamma}_2=(\X_r^{\top}\X_r+n\lambda I)^{-1}\X_r^{\top}\Y_r\]
For simplicity, let $\epsilon_2=(I-\U_1\U_1^{\top})\epsilon$. 
\begin{lemma}
\begin{equation}
\mathbb{E}\|\X_r\gamma_2-\X_r\hat{\gamma}_2\|^2=\sum_{i=k_2+1}^{p}\frac{1}{(d_i^2+n\lambda)^2}(d_i^4\sigma^2+n\lambda^2d_i^2\alpha_i^2)
\label{lemma2}
\end{equation}
where $\alpha_i$ is the $i^{th}$ element of $\alpha=\V^{\top}\gamma_2$
\end{lemma}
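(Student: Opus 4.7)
The plan is to diagonalize everything in the SVD basis of $\X_r$ and then reduce the claim to a coordinate-wise ridge bias/variance calculation on the bottom $p-k_2$ singular directions.

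First I would write $\X_r=\U\D_3\V^{\top}$ (stated just before the lemma) and change to the coordinates $\alpha=\V^{\top}\gamma_2$ and $\hat{\alpha}=\V^{\top}\hat{\gamma}_2$. Because $(\D_3)_{ii}=0$ for $i\le k_2$ and $=d_i$ otherwise, we immediately get
\begin{equation*}
\|\X_r\gamma_2-\X_r\hat{\gamma}_2\|^2=\|\D_3(\alpha-\hat{\alpha})\|^2=\sum_{i=k_2+1}^{p}d_i^2(\alpha_i-\hat{\alpha}_i)^2 .
\end{equation*}
So the whole problem reduces to computing $\mathbb{E}(\alpha_i-\hat{\alpha}_i)^2$ for $i>k_2$.

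Next I would plug $\Y_r=\X_r\gamma_2+\epsilon_2$ into the ridge formula $\hat{\gamma}_2=(\X_r^{\top}\X_r+n\lambda I)^{-1}\X_r^{\top}\Y_r$. Using $\X_r^{\top}\X_r=\V\D_3^2\V^{\top}$ and $\X_r^{\top}=\V\D_3\U^{\top}$, this diagonalizes to
\begin{equation*}
\hat{\alpha}=(\D_3^2+n\lambda I)^{-1}\bigl(\D_3^2\alpha+\D_3\U^{\top}\epsilon_2\bigr),
\end{equation*}
so for $i>k_2$
\begin{equation*}
\alpha_i-\hat{\alpha}_i=\frac{n\lambda}{d_i^2+n\lambda}\,\alpha_i-\frac{d_i}{d_i^2+n\lambda}\,(\U^{\top}\epsilon_2)_i .
\end{equation*}
The key small observation is that for $i>k_2$ the column $\U_i$ is orthogonal to the span of $\U_1$, hence $\U_i^{\top}(I-\U_1\U_1^{\top})=\U_i^{\top}$ and $(\U^{\top}\epsilon_2)_i=\U_i^{\top}\epsilon$. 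Since $\epsilon\sim\mathcal{N}(0,\sigma^2 I)$ this variable has mean $0$ and variance $\sigma^2$, so the cross term vanishes in expectation and I obtain $\mathbb{E}(\alpha_i-\hat{\alpha}_i)^2=\bigl(n^2\lambda^2\alpha_i^2+d_i^2\sigma^2\bigr)/(d_i^2+n\lambda)^2$.

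Multiplying by $d_i^2$ and summing over $i=k_2+1,\dots,p$ delivers the stated formula. The main obstacle is really only bookkeeping: one must be careful that the ridge penalty acts on \emph{all} $p$ coordinates (including the top $k_2$ directions of $\V$ that are annihilated by $\D_3$), but because those directions give $(\D_3)_{ii}=0$ they contribute $d_i^2(\alpha_i-\hat{\alpha}_i)^2=0$ to the loss and drop out automatically. The only real content of the proof is the orthogonality $\U_i^{\top}\U_1=0$ for $i>k_2$, which makes $\epsilon_2$ behave like $\epsilon$ on the relevant components and lets the noise term collapse to $\sigma^2$.
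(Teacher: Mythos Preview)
Your proof is correct and follows essentially the same route as the paper: both diagonalize via the SVD $\X_r=\U\D_3\V^{\top}$, split into a bias and a variance contribution, and use the orthogonality $\U_i^{\top}\U_1=0$ for $i>k_2$ (equivalently, the paper's $\mathbb{E}[\U^{\top}\epsilon_2\epsilon_2^{\top}\U]=\sigma^2\,\mathrm{diag}(0,I_{p-k_2})$) to reduce the noise term to $\sigma^2$ on the bottom directions. The only difference is cosmetic---you carry out the bias--variance split coordinate-wise in the $\alpha$-basis, whereas the paper does it in matrix form with a trace computation---so the arguments are interchangeable.
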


\begin{proof}
Frist define
\begin{eqnarray}
\X_{\lambda}&=&\X_r^{\top}\X_r+n\lambda I \nonumber\\
\D_{\lambda}&=&\D_3^2+n\lambda I \nonumber
\end{eqnarray}

\begin{eqnarray}
\mathbb{E}\|\X_r\gamma_2-\X_r\hat{\gamma}_2\|^2&=&\|\X_r\gamma_2-\X_r\mathbb{E}(\hat{\gamma}_2)\|^2\label{bias}\\
&+&\mathbb{E}\|\X_r\mathbb{E}(\hat{\gamma}_2)-\X_r\hat{\gamma}_2\|^2\label{var}
\end{eqnarray}
Consider (\ref{bias}) and (\ref{var}) separately.
\begin{eqnarray}
(\ref{bias})&=&\|\X_r\X_{\lambda}^{-1}\X_r^{\top}\X_r\gamma_2-\X_r\gamma_2\|^2\nonumber\\
&=&\|\U\D_3\D_{\lambda}^{-1}\D_3^2\V^{\top}\gamma_2-\U\D_3\V^{\top}\gamma_2\|^2\nonumber\\
&=&\|\D_3\D_{\lambda}^{-1}\D_3^2\alpha-\D_3\alpha\|^2\nonumber\\
&=&\sum_{i=k_2+1}^{p}\alpha_i^2d_i^2(\frac{n\lambda}{d_i^2+n\lambda})^2\nonumber
\end{eqnarray}

\begin{eqnarray}
&&(\ref{var})=\mathbb{E}_{\epsilon_2}\|\X_r\X_{\lambda}^{-1}\X_r^{\top}\epsilon_2\|^2\nonumber\\
&=&\mathbb{E}_{\epsilon_2}\text{Tr}\big(\X_r\X_{\lambda}^{-1}\X_r^{\top}\X_r\X_{\lambda}^{-1}\X_r^{\top}\epsilon_2\epsilon_2^{\top}\big)\nonumber\\
&=&\mathbb{E}_{\epsilon_2}\text{Tr}\big(\D_3\D_{\lambda}^{-1}\D_3^2\D_{\lambda}^{-1}\D_3\U^{\top}\epsilon_2\epsilon_2^{\top}\U\big)\nonumber\\
&=&\text{Tr}\big(\D_3\D_{\lambda}^{-1}\D_3^2\D_{\lambda}^{-1}\D_3\mathbb{E}_{\epsilon_2}[\U^{\top}\epsilon_2\epsilon_2^{\top}\U]\big)\nonumber
\end{eqnarray}
Note that  
\begin{equation*}
\mathbb{E}_{\epsilon_2}[\U^{\top}\epsilon_2\epsilon_2^{\top}\U]=\text{diag}(0,I_{p-k_2})\sigma^2
\end{equation*} 
($\text{diag}(0,I_{p-k_2})$replace the top $k_2\times k_2$ block of the identity matrix with 0),
\begin{equation}
(\ref{var})=\sum_{i=k_2+1}^{p}\frac{d_i^4}{(d_i^2+n\lambda)^2}\sigma^2
\end{equation}
Add the two terms together finishes the proof.
\end{proof}

Plug (\ref{lemma1}) (\ref{lemma2}) into (\ref{rdecomp}) we have
\begin{theorem}
The risk of LING algorithm under fixed design setting is
\begin{equation}
\frac{1}{n}\sum_{j=1}^{k_2}\frac{d_j^4\sigma^2+\gamma_{1,j}^2n^2\lambda^2}{(d_j^2+n\lambda)^2}+\frac{1}{n}\sum_{i=k_2+1}^{p}\frac{d_i^4\sigma^2+n^2\lambda^2d_i^2\alpha_i^2}{(d_i^2+n\lambda)^2}
\end{equation}
\end{theorem}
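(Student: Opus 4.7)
The plan is to observe that the theorem is essentially a bookkeeping consequence of the decomposition (\ref{rdecomp}) together with Lemma 1 and Lemma 2, so the main work has already been done. First I would write out the risk $\frac{1}{n}\mathbb{E}\|\mathbb{E}(\Y|\X)-\hat{\Y}\|^2$, expand $\mathbb{E}(\Y|\X)=\U_1\gamma_1+\X_r\gamma_2$ using the representation derived just before (\ref{rdecomp}), and expand $\hat{\Y}=\U_1\hat{\gamma}_{1,s}+\X_r\hat{\gamma}_2$ from step 6 of Algorithm \ref{al1}. The cross term in the squared norm vanishes because $\U_1^{\top}\X_r=0$ (by construction of $\X_r$ as the residual after projecting $\X$ onto $\U_1$), which is precisely the identity cited in the paragraph following (\ref{rdecomp}); I would state this orthogonality explicitly and note that it holds deterministically, so it is preserved after taking expectation in $\epsilon$.

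Next I would invoke Lemma 1 to replace $\frac{1}{n}\mathbb{E}\|\U_1\gamma_1-\U_1\hat{\gamma}_{1,s}\|^2$ by the sum over $j=1,\dots,k_2$ given in (\ref{lemma1}), and invoke Lemma 2 to replace $\mathbb{E}\|\X_r\gamma_2-\X_r\hat{\gamma}_2\|^2$ by the sum over $i=k_2+1,\dots,p$ given in (\ref{lemma2}), dividing by $n$ to match the normalization of the risk. Adding the two pieces yields exactly the claimed expression, so the proof reduces to a one-line substitution once the decomposition is justified.

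There is essentially no hard step here; the only thing to be careful about is making sure the index ranges and normalizations line up (the first sum runs over the top $k_2$ coordinates indexed by the singular values $d_1,\dots,d_{k_2}$ with coefficients $\gamma_{1,j}$, while the second sum runs over the complementary indices $k_2+1,\dots,p$ with coefficients $\alpha_i$), and that the factor $\frac{1}{n}$ on the variance term from Lemma 2 is written correctly after dividing by $n$. I would end the proof by simply displaying the combined expression and pointing back to (\ref{rdecomp}), Lemma 1, and Lemma 2.
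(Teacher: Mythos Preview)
Your proposal is correct and matches the paper's own proof essentially verbatim: the paper simply states ``Plug (\ref{lemma1}) (\ref{lemma2}) into (\ref{rdecomp})'' and then writes down the theorem. Your observation about carefully tracking the $\frac{1}{n}$ normalization from Lemma~2 is the only nontrivial bookkeeping, and you have handled it correctly.
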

\begin{remark}
This risk is the same as the risk of ridge regression provided by Lemma 1 in \citep{dhillonrisk}. Actually,  LING gets exactly the same prediction as RR on the training dataset.  This is very intuitive since on the training set LING is essentially decomposing the RR solution into the first stage shrinkage PCR predictor on top $k_2$ PCs and the second stage GD predictor over the residual spaces as explained in section 2. 
\end{remark}

\section{Experiments}

In this section we compare the accuracy and computational cost (evaluated in terms of FLOPS) of 3 different algorithms for solving Ridge Regression: Gradient Descent with Optimal step size (GD), Stochastic Variance Reduction Gradient (SVRG) \citep{johnson13} and LING. Here SVRG is an improved version of stochastic gradient descent which achieves exponential  convergence with constant step size. We also consider Principle Component Regression (PCR) \citep{artemiou09,Ian05}  which is another common way for running large scale regression. Experiments are performed on 3 simulated models and 2 real datasets. In general, LING performs well on all 3 simulated datasets while GD, SVRG and PCR fails in some cases. For two real datasets, all algorithms give reasonable performance while SVRG and LING are the best. Moreover, both stages of LING only requires only a moderate amount of matrix multiplications each cost $O(np)$, much faster to run on matlab compared with SVRG which contains a lot of loops.
\subsection{Simulated Data}
Three different datasets are constructed based on the fixed design model $\Y=\X\beta+\epsilon$ where $\X$ is of size $2000\times 1500$. In the three experiments $\X$ and $\beta$ are generated randomly in different ways (more details in following sections) and i.i.d Gaussian noise is added to $\X\beta$ to get $\Y$. Then GD, SVRG, PCR and LING are performed on the dataset. For GD, we try different number of iterations $n_1$. For SVRG, we vary the number of passes through data denoted by $n_{\text{SVRG}}$. The number of iterations SVRG takes equals the number of passes through data times sample size and each iteration takes $O(p)$ FLOPS. The step size of SVRG is chosen by cross validation but this cost is not considered when evaluating the total computational cost. Note that one advantage of GD and \li is that due to the simple quadratic form of the target function, their step size can be computed directly from the data without cross validation which introduces extra cost. For PCR we pick different number of PCs ($k_1$). For LING we pick top $k_2$ PCs in the first stage and try different number of iterations $n_2$ in the second stage. The computational cost and the risk of the four algorithms are computed. The above procedure is repeated over 20 random generation of $\X$, $\beta$ and $\Y$. The risk and computational cost of the traditional RR solution (\ref{explicit})  for every dataset is also computed as a benchmark.\\The parameter set up for the three datasets are listed in table \ref{t1}.
\begin{table}[t]
\caption{parameter setup for simulated data}
\vskip 0.15in
\begin{center}
\begin{small}
\begin{sc}

\begin{tabular}{c|c|c|c}
\hline
& Model 1 & Model 2 & Model 3 \\
\hline
$k_1$&\specialcell{21,22,23,26\\30,50,100}&\specialcell{20,30,50\\100,150,400}&\specialcell{20,30,50,100\\150,400}\\
\hline
$n_1$&\specialcell{10,20,30\\50,80,100\\150,200}&\specialcell{2,4,6,8,10\\15,20,30}&\specialcell{6,10,15,20\\30,50,80\\120,180,250}\\
\hline
$k_2$&20&20&20\\
\hline
$n_2$&\specialcell{1,2,3,5\\8,13,20}&\specialcell{2,4,6,8,10\\15,20,30}&\specialcell{2,4,6,8,10\\15,30}\\
\hline
$n_{\text{SVRG}}$ &\specialcell{30,50,80\\120,150}&\specialcell{5,10,20\\30,50}&\specialcell{5,10,15,25\\40,60,90}\\
\hline
\end{tabular}
\end{sc}
\end{small}
\end{center}
\vskip -0.1in
\label{t1}
\end{table}
\subsubsection{Model 1}
In this model the design matrix $\X$ has a steep spectrum. The top 30 singular values of $\X$ decay exponentially as $1.3^i$ where $i=40,39,38...,11$. The spectrum of $\X$ is shown in figure \ref{ss}. To generate $\X$, we fix the diagonal matrix $\D_e$ with the designed spectrum and construct $\X$ by $\X=\U_e\D_e\V_e^{\top}$ where $\U_e$, $V_e$ are two random orthonormal matrices. The elements of $\beta$ are sampled uniformly from interval $[-2.5,2.5]$. Under this set up, most of the energy of the $\X$ matrix lies in top PCs since the top singular values are much larger than the remaining ones so PCR works well. But as indicated by (\ref{rgd}), the convergence of GD is very slow.\\
The computational cost and average risk of the four algorithms and also the RR solution (\ref{explicit}) over 20 repeats are shown in figure \ref{f1}. As shown in figure \ref{f1} both PCR and LING work well by achieving risk close to RR at less computational cost. SVRG is worse than PCR and LING but much better than GD.
\begin{figure}[h]
\centering
\includegraphics[width=7cm]{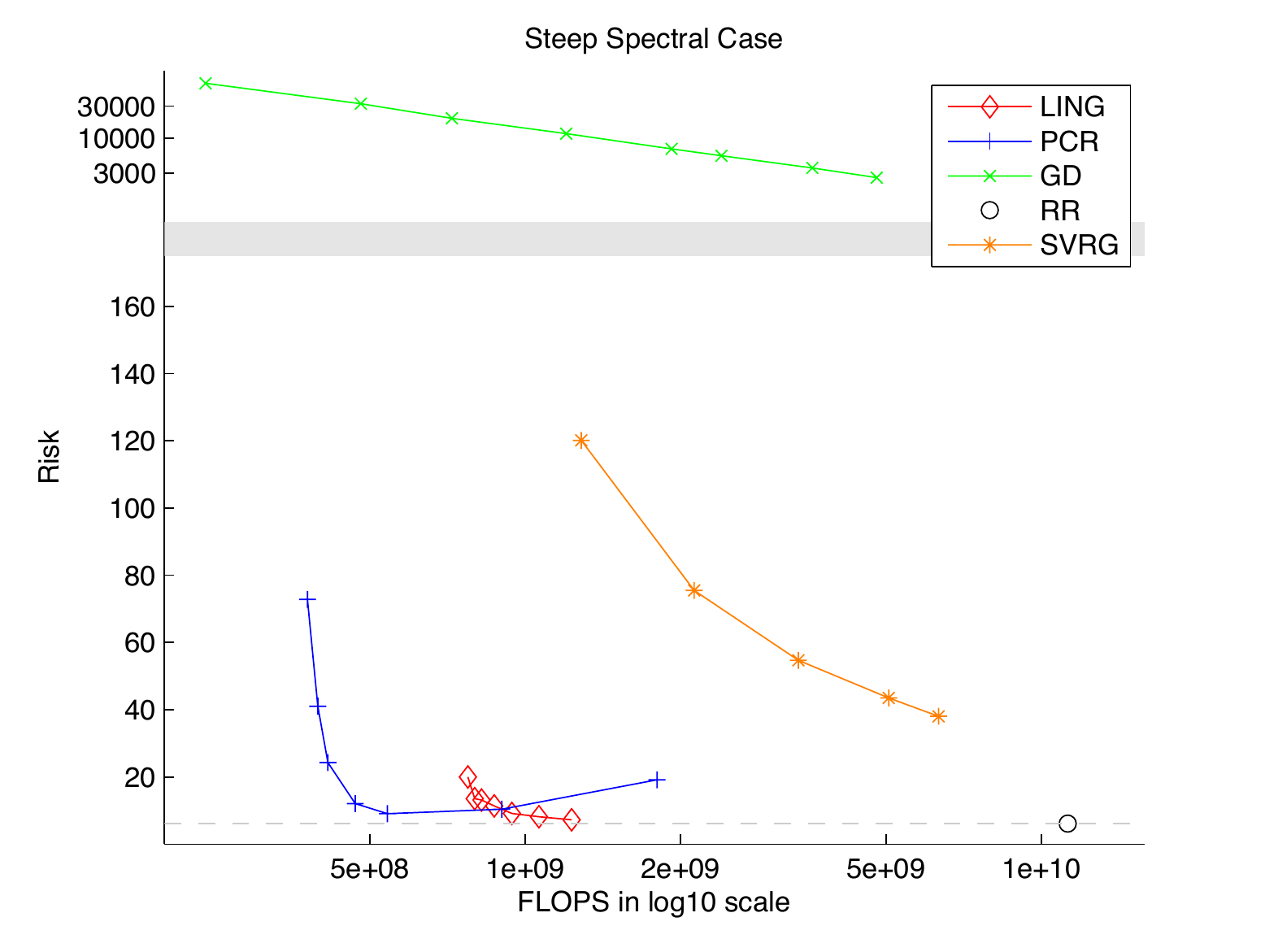}
\caption{{\bf Model 1:} Risk VS. Computational Cost plot. PCR and LING approaches the RR risk very fast. SVRG also approaches RR risk but cost more than the previous two. GD is very slow and inaccurate.}
\label{f1}
\end{figure}

\subsubsection{Model 2}
In this model the design matrix $\X$ has a flat spectrum. The singular values of $\X$ are sampled uniformly from $[\frac{\sqrt{2000}}{2},\sqrt{2000}]$. The spectrum of $\X$ is shown in figure \ref{fs}. To generate $\X$, we fix the diagonal matrix $\D_e$ with the designed spectrum and construct $\X$ by $\X=\U_e\D_e\V_e^{\top}$ where $\U_e$, $V_e$ are two random orthonormal matrices. The elements of $\beta$ are sampled uniformly from interval $[-2.5,2.5]$. Under this set up, the signal are widely spread among all PCs since the spectrum of $\X$ is relatively flat. PCR breaks down because it fails to catch the signal on bottom PCs. As indicated by (\ref{rgd}), GD converges relatively fast due to the flat spectrum of $\X$.\\
The computational cost and average risk of the four algorithms and also the RR solution (\ref{explicit}) over 20 repeats are shown in figure \ref{f2}. As shown by the figure GD works best since it approaches the risk of RR at the the lowest computational cost. LING  and SVRG also works by achieving reasonably low risk with less computational cost. PCR works poorly as explained before.
\begin{figure}[h]
\centering
\includegraphics[width=7cm]{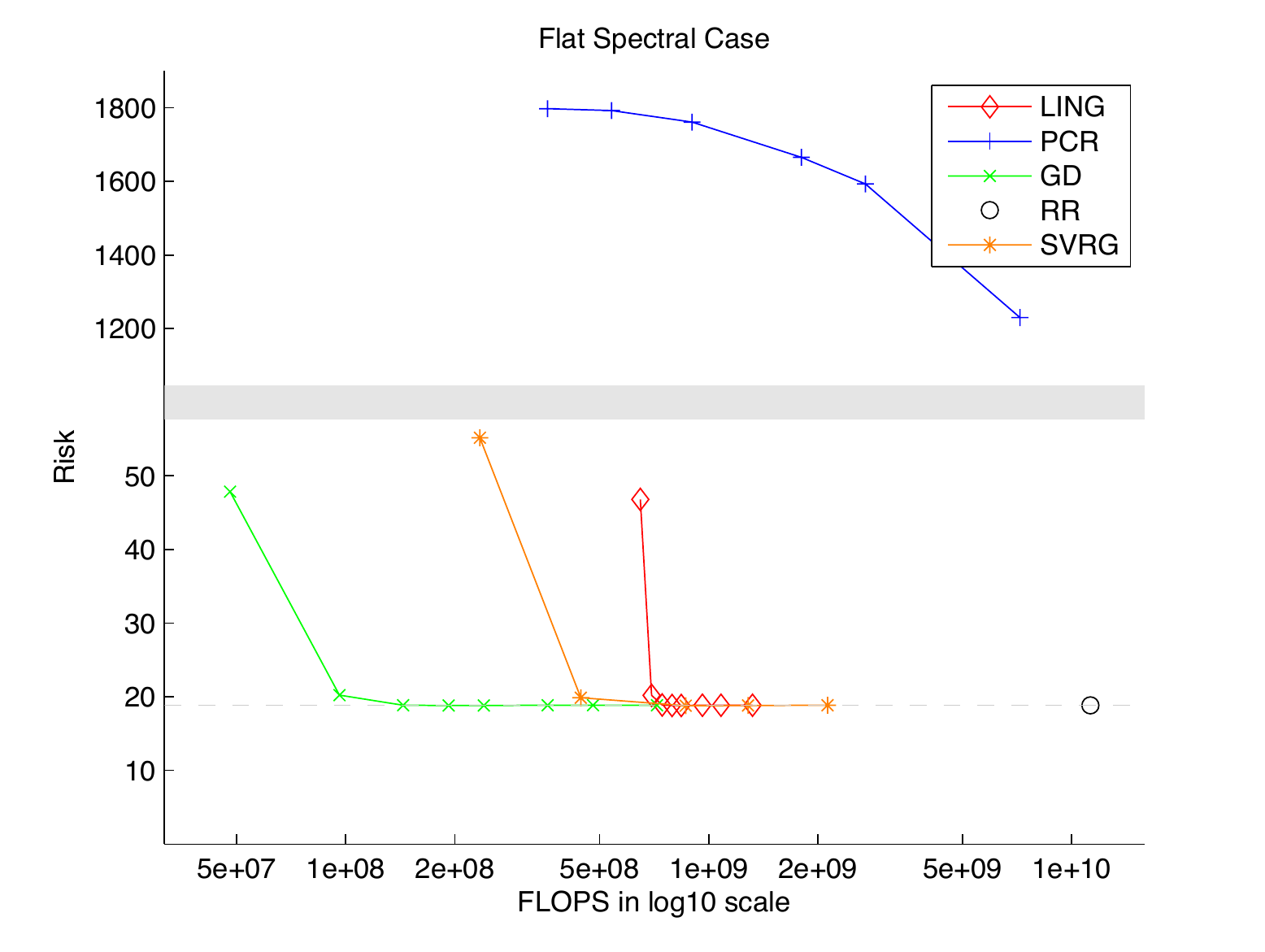}
\caption{{\bf Model 2:} Risk VS. Computational Cost plot. GD approaches the RR risk very fast. SVRG and LING are slower than GD but still achieves risk close to RR at less cost. PCR is slow and has huge risk.}
\label{f2}
\end{figure}
\subsubsection{Model 3}
This model presented a special case where both PCR and GD will break down. The singular values of $\X\in 2000\times 1500$ are constructed by first uniformly sample from $[\frac{\sqrt{2000}}{2},\sqrt{2000}]$. The top $15$ sampled values are then multiplied by $10$. The top $100$ sigular values of $\X$ are shown in figure \ref{es}. To generate $\X$, we fix the diagonal matrix $\D_e$ with the designed spectrum and construct $\X$ by $\X=\U_e\D_e$ where $\U_e$ is a random orthonormal matrix. The first $15$ and last $1000$ elements of the coefficient vector $\beta\in 1500 \times 1$ are sampled uniformly from interval $[-2.5,2.5]$ and other elements of $\beta$ remains $0$. In this set up, $\X$ has orthogonal columns which are the PCs, and the signal lies only on the top $15$ and bottom $1000$ PCs. PCR won't work since a large proportion of signal lies on the bottom PCs. On the other hand, GD won't work as well since the top few spectral values are too large compared with other singular values, which makes GD converges very slowly.\\
The computational cost and risk of the four algorithms and also the RR solution (\ref{explicit}) over 20 repeats are shown in figure \ref{f3}. As shown by the figure LING works best in this set up. SVRG is slightly worse than LING but still approaching RR with less cost. In this case, GD converges slowly and PCR is completely off target as explained before.
\begin{figure}[h]
\centering
\includegraphics[width=7cm]{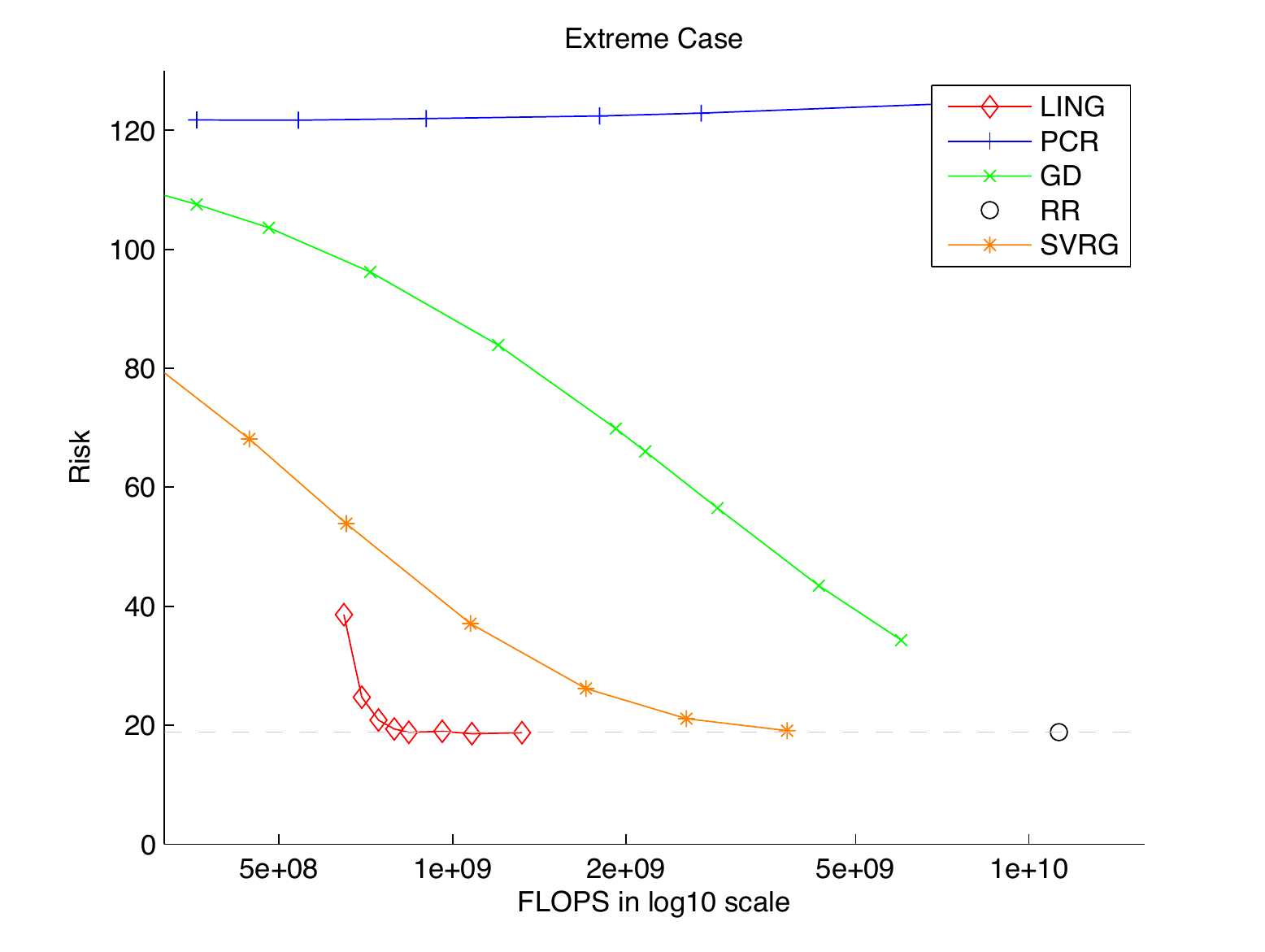}
\caption{{\bf Model 3:} Risk VS. Computational Cost plot. LING approaches RR risk the fastest. SVRG is slightly slower than LING. GD also approaches RR risk but cost more than LING. PCR has a huge risk no matter how many PCs are selected.}
\label{f3}
\end{figure}
\begin{figure}
\centering
\includegraphics[width=4.2cm]{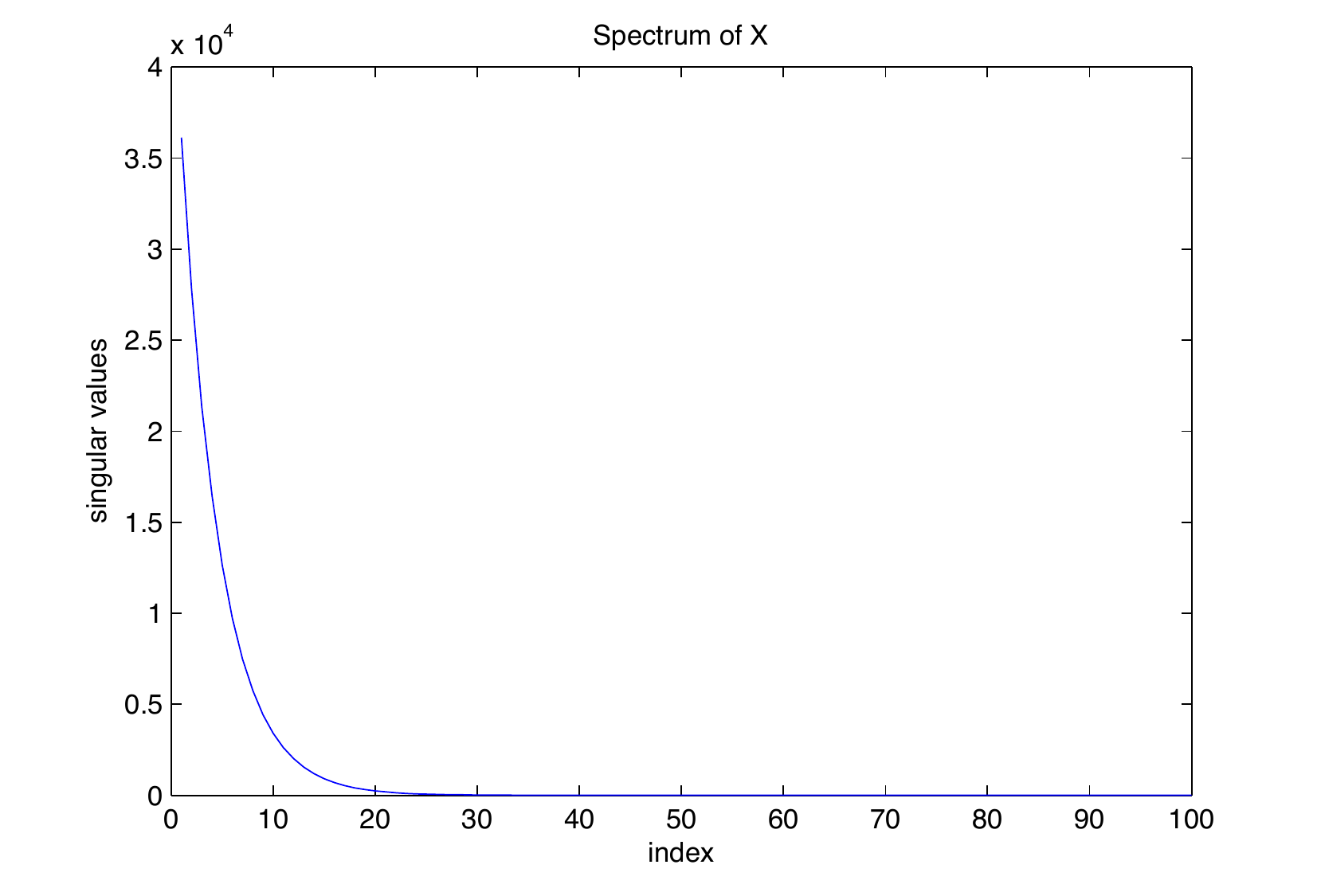}
\caption{Top $100$ singular values of $\X$ in Model 1} 
\label{ss}
\end{figure}
\begin{figure}[h] 
\centering 
\begin{minipage}[b]{0.45\linewidth}
\includegraphics[width=4.2cm]{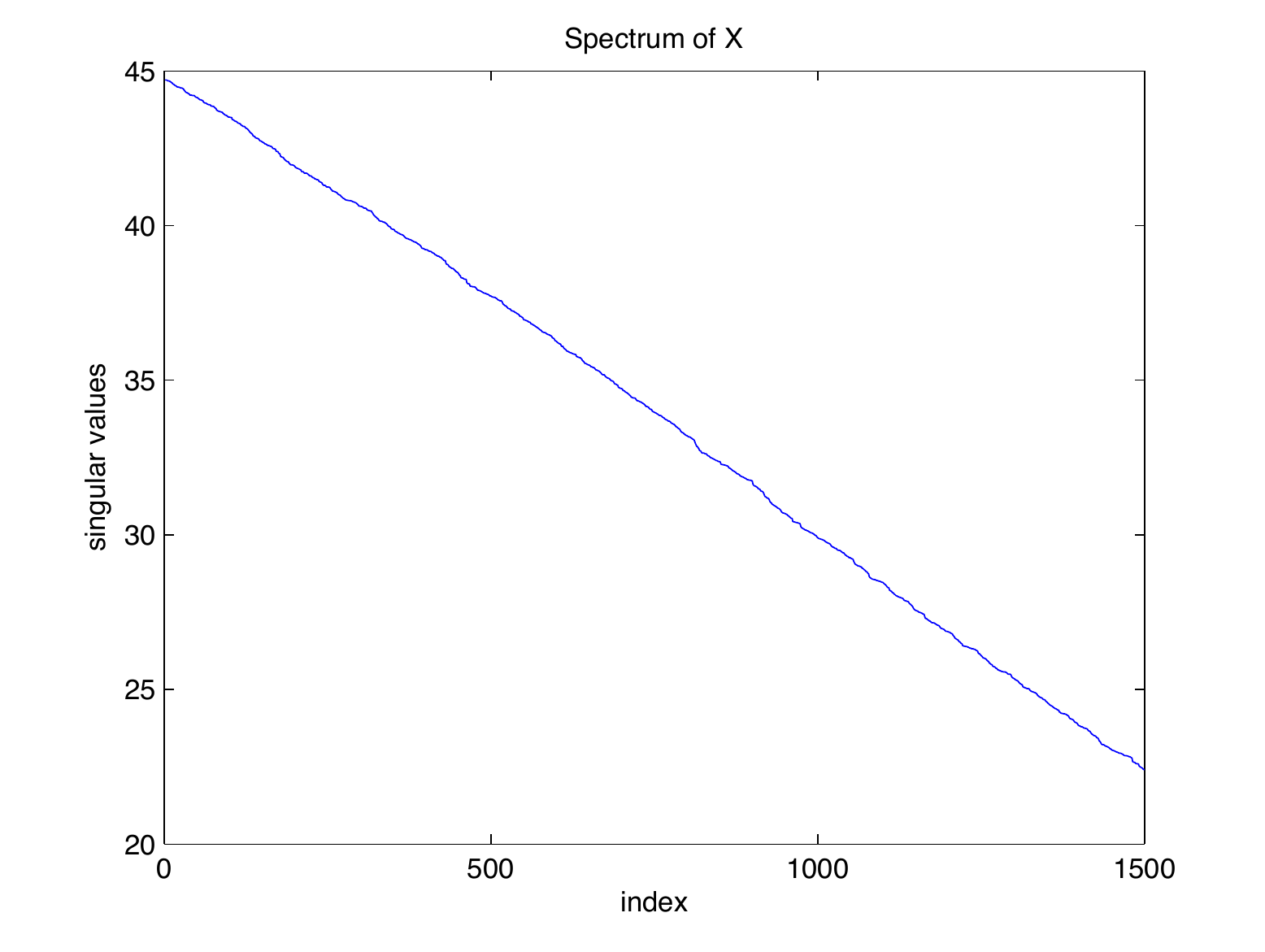}
\caption{Singular values of $\X$ in Model 2}
\label{fs}
\end{minipage} 
\quad
\begin{minipage}[b]{0.45\linewidth}
\includegraphics[width=4.2cm]{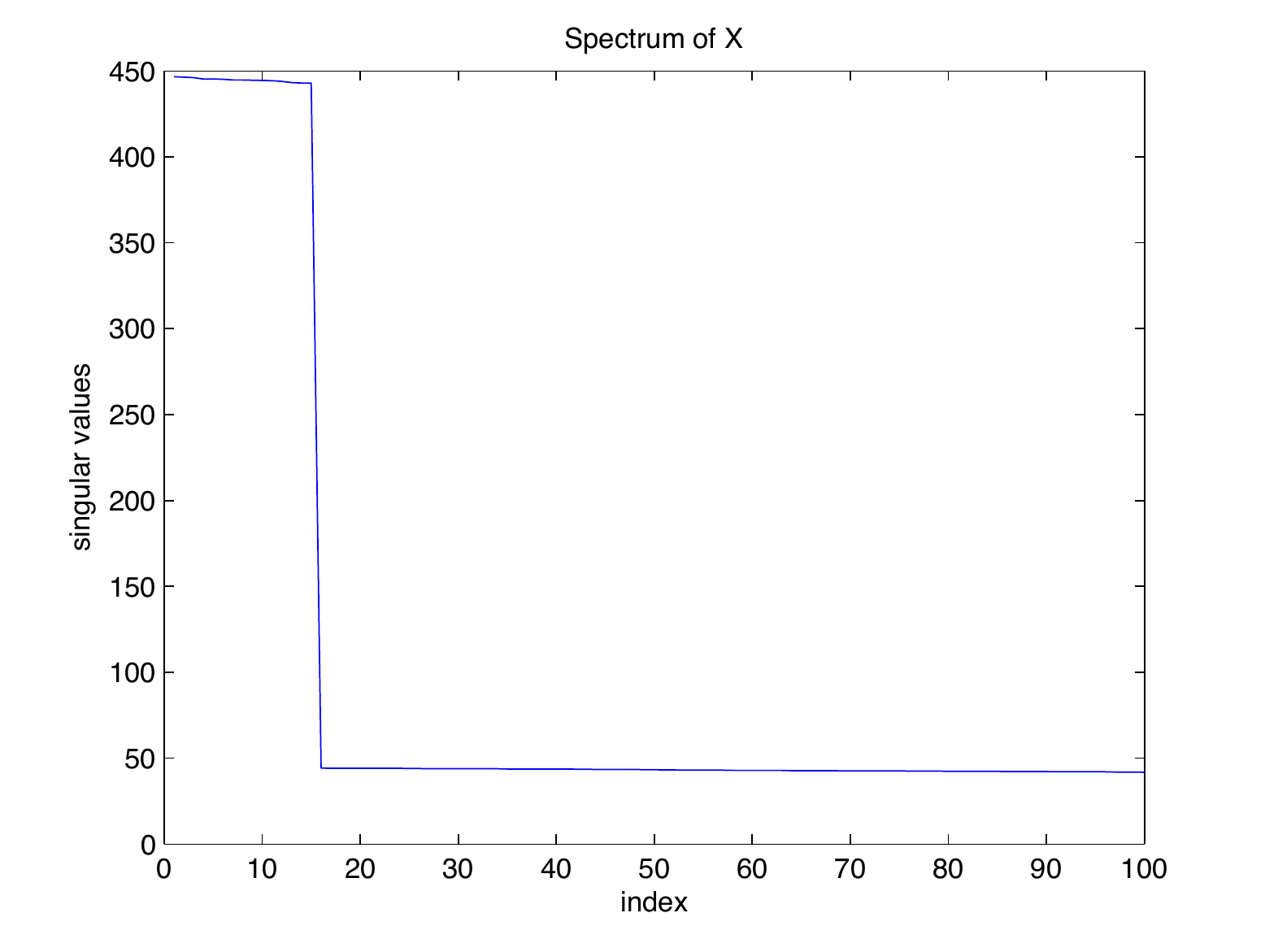}
\caption{Top $100$ singular values of $\X$ in Model 3} 
\label{es}
\end{minipage} 
 \end{figure}

\subsection{Real Data}
In this section we compare PCR, GD, SVRG and LING with the RR solution (\ref{explicit}) on two real datasets.\\
\subsubsection{Gisette Dataset}
The first is the gisette data set \citep{arcene} from the UCI repository which is a bi-class classification task. Every row of the design matrix $\X \in 6000\times 5000$ consists of  pixel features of a single digit "4" or "9" and $\Y$ gives the class label. Among the $6000$ samples, we use $5000$ for training and $1000$ for testing. The classification error rate for RR  solution (\ref{explicit}) is $0.019$. Since the goal is to compare different algorithms for regression,  we don't care about achieving the state of the art accuracy for this dataset as long as regression works reasonably well. When running PCR, we pick top $k_1=10,20,40,80,150,300,400$ PCs and in GD we iterate $n_1=2,5,10,15,20,30,50,100,150$ times. For SVRG we try $n_{\text{SVRG}}=1,2,3,5,10,20,40,80$ passes through the data. For LING we pick $k_2=5, 15$ PCs in the first stage and try $n_2=1,2,4,8,10,15,20,30,50$ iterations in the second stage. The computational cost and average classification error of the four algorithms and also the RR solution (\ref{explicit}) on test set over 6 different train test splits are shown in figure \ref{f4}. The top $150$ singular values of $\X$ are shown in figure \ref{gs}. As shown in the figure, SVRG gets close to the RR error very fast. The two curves of LING with $k_2=5,15$ are slower than SVRG since some initial FLOPS are spent on computing top PCs but after that they approach RR error very fast. GD also converges to RR but cost more than the previous two algorithms. PCR performs worst in terms of error and computational cost.
\begin{figure}[h]
\centering
\includegraphics[width=7cm]{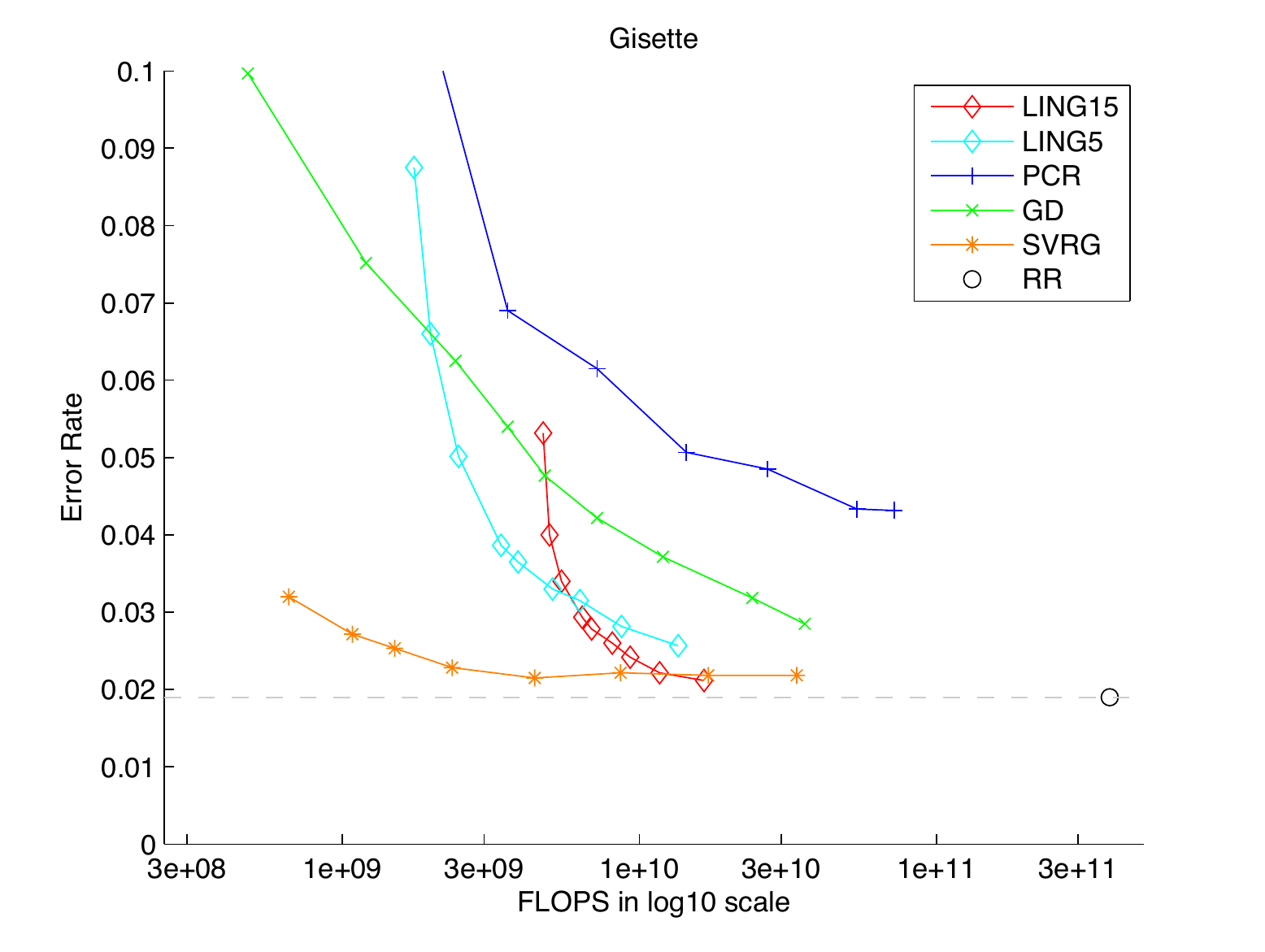}
\caption{{\bf Gisette:} Error Rate VS. Computational Cost plot. SVRG achieves  small error rate fastest. Two LING lines with different $n_2$ spent some FLOPS on computing top PCs first, but then converges very fast to a lower error rate. GD and PCR also provide reasonably small error rate and are faster than RR, but suboptimal compared with SVRG and LING.}
\label{f4}
\end{figure}
\subsubsection{Buzz in Social Media}
The second dataset is the UCI buzz in social media dataset which is a regression task. The goal is to predict popularity (evaluated by the mean number of active discussion) of a certain topic on Twitter over a period. The original feature matrix contains some statistics about this topic over that period like number of discussions created and new authors interacting at the topic. The original feature dimension is 77. We add quadratic interactions to make it 3080. To save time, we only used a subset of $8000$ samples. The samples are split into $6000$ train and $2000$ test. We use MSE on the test data set as the error measure. For PCR we pick $k_1=10,20,30,50,100,150$ PCs and in GD we iterate $n_1=1,2,4,6,8,10,15,20,30,40,60,100$ times. For SVRG we try $n_{\text{SVRG}}=1,2,3,5,10,15,20,40,80$ passes through the dataset and for LING we pick $k_2=5,15$ in the first stage and $n_2=0,1,2,4,6,8,10,15,20,25$ iterations in the second stage. The computational cost and average MSE on test set over 5 different train test splits are shown in figure \ref{f5}.  The top $150$ singular values of $\X$ are shown in figure \ref{bs}. As shown in the figure, SVRG approaches MSE of RR very fast. LING spent some initial FLOPS for computing top PCs but after that converges fast.  GD and PCR also achieves reasonable performance but suboptimal compared with SVRG and LING. The MSE of PCR first decays when we add more PCs into regression but finally goes up due to overfit.  

\begin{figure}[h]
\centering
\includegraphics[width=7cm]{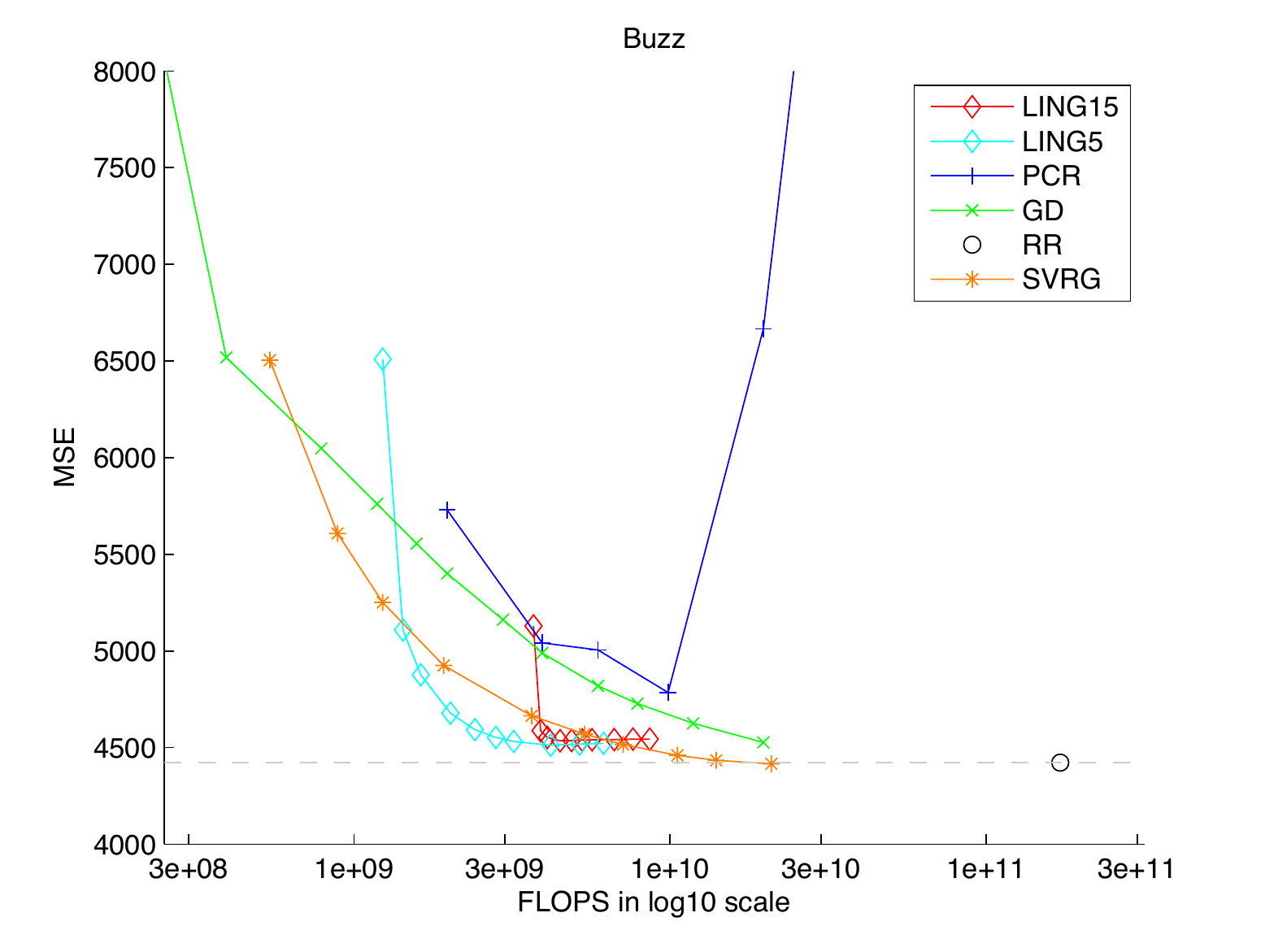}
\caption{{\bf Buzz :} MSE VS. Computational Cost plot. SVRG and two LING lines with different $n_2$ achieves small MSE fast. GD is slower than LING and SVRG. PCR reaches its smallest MSE at $k_1=50$ then overfits.}
\label{f5}
\end{figure}

\begin{figure}[h] 
\centering 
\begin{minipage}[b]{0.45\linewidth}
\includegraphics[width=4.2cm]{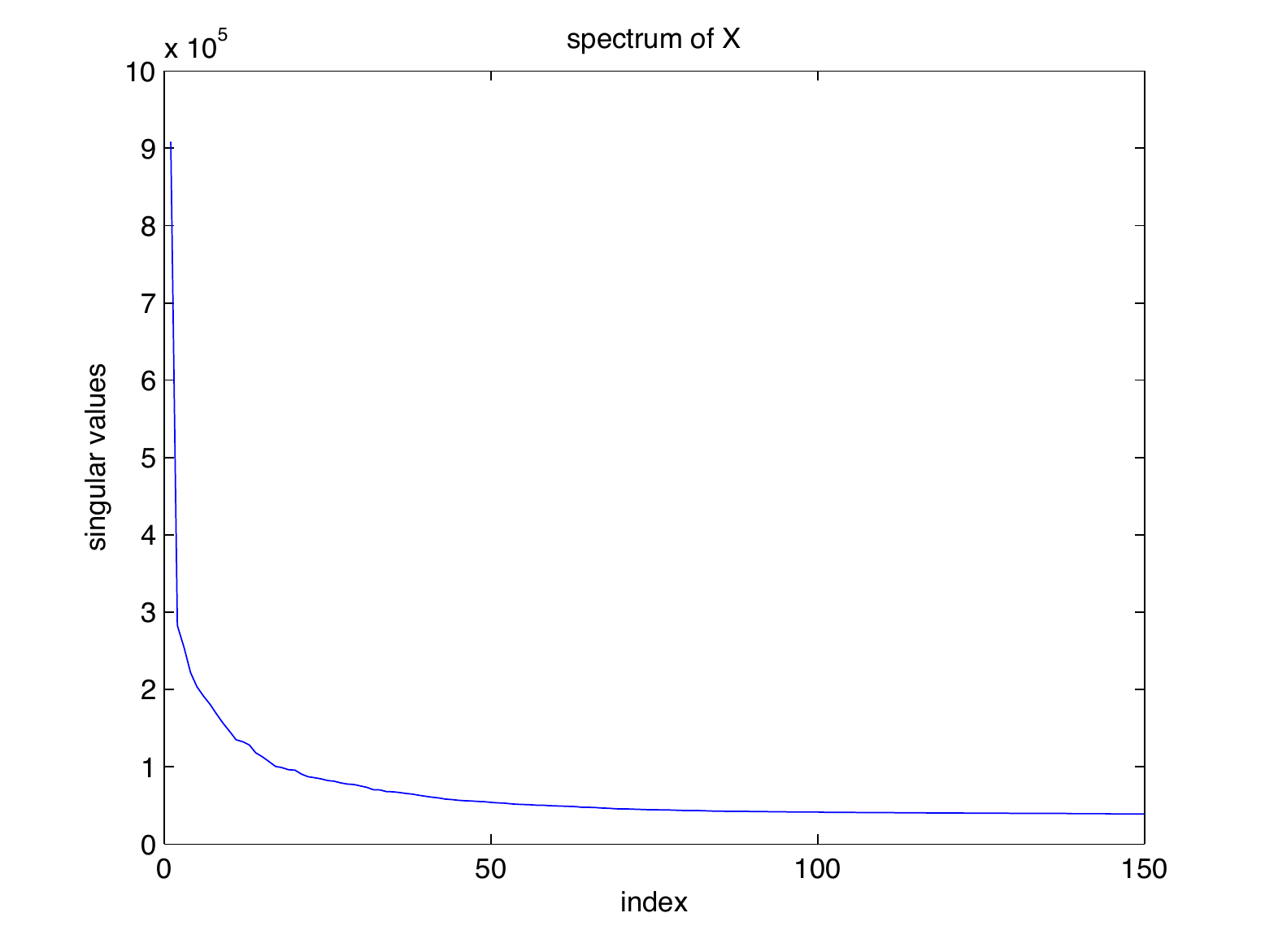}
\caption{Top 150 singular values of $\X$ in Gisette Dataset}
\label{gs}
\end{minipage} 
\quad
\begin{minipage}[b]{0.45\linewidth}
\includegraphics[width=4.2cm]{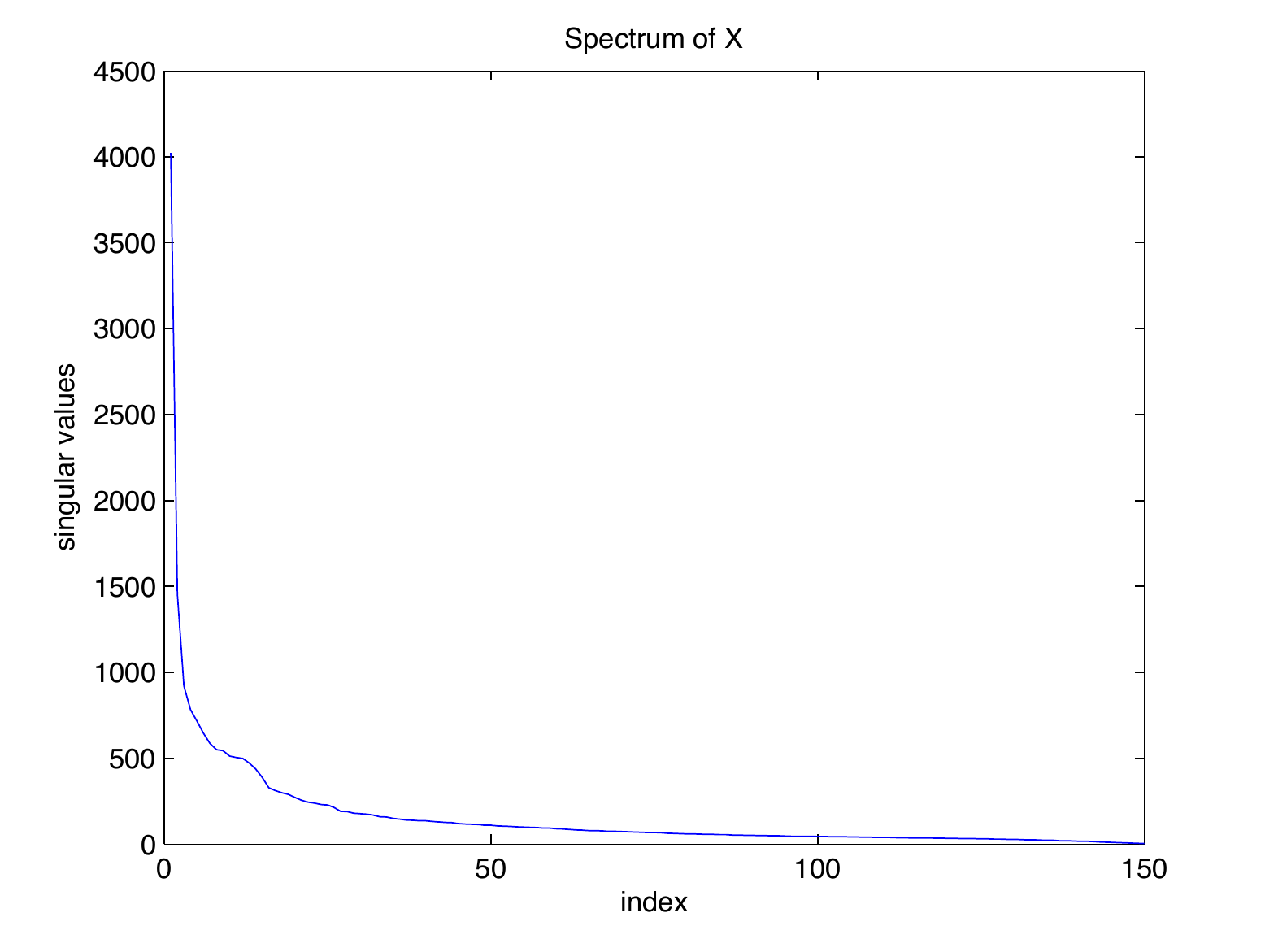}
\caption{Top $150$ singular values of $\X$ in Social Media Buzz Dataset} 
\label{bs}
\end{minipage} 
\end{figure}
\section{Summary}
In this paper we present a two stage algorithm LING for computing large scale Ridge Regression which is both fast and robust in contrast to the well known approaches GD and PCR. We show that under the fixed design setting LING actually has the same risk as Ridge Regression assuming convergence. In the experiments, LING achieves good performances on all datasets when compare with three other large scale regression algorithms.\\
We conjecture that same strategy can be also used to accelerate the convergence of stochastic gradient descent when solving Ridge Regression since the first stage in LING essentially removes the high variance directions of $\X$, which will lead to variance reduction for the random gradient direction generated by SGD.

\bibliography{pcr,pcgdr}

\begin{thebibliography}{12}
\providecommand{\natexlab}[1]{#1}
\providecommand{\url}[1]{\texttt{#1}}
\expandafter\ifx\csname urlstyle\endcsname\relax
  \providecommand{\doi}[1]{doi: #1}\else
  \providecommand{\doi}{doi: \begingroup \urlstyle{rm}\Url}\fi

\bibitem[A.Epelman(2007)]{marina07}
Marina A.Epelman.
\newblock Rate of convergence of steepest descent algorithm.
\newblock 2007.

\bibitem[Artemiou and Li(2009)]{artemiou09}
Andreas Artemiou and Bing Li.
\newblock On principal components and regression: a statistical explanation of
  a natural phenomenon.
\newblock \emph{Statistica Sinica}, 19\penalty0 (4):\penalty0 1557--1565, 2009.
\newblock ISSN 1017-0405.

\bibitem[Bottou(2010)]{bottou2010}
L\'{e}on Bottou.
\newblock {Large-Scale Machine Learning with Stochastic Gradient Descent}.
\newblock In Yves Lechevallier and Gilbert Saporta, editors, \emph{Proceedings
  of the 19th International Conference on Computational Statistics
  (COMPSTAT'2010)}, pages 177--187, Paris, France, August 2010. Springer.

\bibitem[Dhillon et~al.(2013)Dhillon, Foster, Kakade, and Ungar]{dhillonrisk}
Paramveer~S. Dhillon, Dean~P. Foster, Sham~M. Kakade, and Lyle~H. Ungar.
\newblock A risk comparison of ordinary least squares vs ridge regression.
\newblock \emph{Journal of Machine Learning Research}, 14:\penalty0 1505--1511,
  2013.

\bibitem[Guyon(2003)]{arcene}
Isabelle Guyon.
\newblock Design of experiments for the nips 2003 variable selection benchmark.
\newblock 2003.

\bibitem[Halko et~al.(2011{\natexlab{a}})Halko, Martinsson, and
  Tropp]{tropprandom}
N.~Halko, P.~G. Martinsson, and J.~A. Tropp.
\newblock Finding structure with randomness: Probabilistic algorithms for
  constructing approximate matrix decompositions.
\newblock \emph{SIAM Rev.}, 53\penalty0 (2):\penalty0 217--288, May
  2011{\natexlab{a}}.
\newblock ISSN 0036-1445.

\bibitem[Halko et~al.(2011{\natexlab{b}})Halko, Martinsson, Shkolnisky, and
  Tygert]{halko11}
Nathan Halko, Per-Gunnar Martinsson, Yoel Shkolnisky, and Mark Tygert.
\newblock An algorithm for the principal component analysis of large data sets.
\newblock \emph{SIAM J. Scientific Computing}, 33\penalty0 (5):\penalty0
  2580--2594, 2011{\natexlab{b}}.

\bibitem[Johnson and Zhang(2013)]{johnson13}
Rie Johnson and Tong Zhang.
\newblock Accelerating stochastic gradient descent using predictive variance
  reduction.
\newblock \emph{Advances in Neural Information Processing Systems (NIPS)},
  2013.

\bibitem[Jolliffe(2005)]{Ian05}
Ian Jolliffe.
\newblock \emph{Principal Component Analysis. Encyclopedia of Statistics in
  Behavioral Science}.
\newblock John Wiley \& Sons, 2005.

\bibitem[Lu et~al.(2013)Lu, Dhillon, Foster, and Ungar]{yichao13}
Yichao Lu, Paramveer~S. Dhillon, Dean Foster, and Lyle Ungar.
\newblock Faster ridge regression via the subsampled randomized hadamard
  transform.
\newblock In \emph{Advances in Neural Information Processing Systems (NIPS)},
  2013.

\bibitem[Saunders et~al.(1998)Saunders, Gammerman, and Vovk]{saunders98}
G.~Saunders, A.~Gammerman, and V.~Vovk.
\newblock {Ridge regression learning algorithm in dual variables}.
\newblock In \emph{Proc. 15th International Conf. on Machine Learning}, pages
  515--521. Morgan Kaufmann, San Francisco, CA, 1998.

\bibitem[Zhang(2004)]{zhang04}
Tong Zhang.
\newblock Solving large scale linear prediction problems using stochastic
  gradient descent algorithms.
\newblock In \emph{ICML 2004: Proceedings of the twenty-first International
  Conference on Machine Learning. OMNIPRESS}, pages 919--926, 2004.

\end{thebibliography}
\bibliographystyle{plainnat}

\end{document}